\documentclass{article} % For LaTeX2e
\usepackage{iclr2021_conference,times}

% Optional math commands from https://github.com/goodfeli/dlbook_notation.
%\input{math_commands.tex}

\usepackage{hyperref}
\usepackage{url}
\usepackage{booktabs}       % professional-quality tables
\usepackage{amsfonts}       % blackboard math symbols
\usepackage{nicefrac}       % compact symbols for 1/2, etc.
\usepackage{microtype}      % microtypography
\usepackage{graphicx}

\usepackage{amsmath}
\usepackage{amsthm}
\usepackage{amssymb}
\usepackage{bm}
\usepackage{mathtools}
\usepackage{xcolor}
\usepackage{wrapfig}
\usepackage{subcaption}
\usepackage{algorithm}
\usepackage{algpseudocode}
\usepackage{xcolor}

\title{Rethinking the Role of Gradient-based Attribution Methods for Model Interpretability}

\newcommand{\latin}[1]{{\it #1}}

\author{Suraj Srinivas \\Idiap Research Institute \& EPFL
\\ \texttt{suraj.srinivas@idiap.ch}
\And Fran\c{c}ois Fleuret \\ University of Geneva
\\ \texttt{francois.fleuret@unige.ch} }

\newtheorem*{observation}{Observation}

\newtheorem*{hypothesis}{Hypothesis}

% Maths Symbols
\newcommand{\R}{\mathbb{R}}
\newcommand{\E}{\mathbb{E}}
\newcommand{\f}[1]{\boldsymbol{#1}}
\newcommand{\X}{\mathbf{x}}
\newcommand{\grad}{\nabla_{\X}}

% Plotting visualization tables

% The \author macro works with any number of authors. There are two commands
% used to separate the names and addresses of multiple authors: \And and \AND.
%
% Using \And between authors leaves it to \LaTeX{} to determine where to break
% the lines. Using \AND forces a linebreak at that point. So, if \LaTeX{}
% puts 3 of 4 authors names on the first line, and the last on the second
% line, try using \AND instead of \And before the third author name.

\iclrfinalcopy % Uncomment for camera-ready version, but NOT for submission.
\begin{document}

\maketitle

\begin{abstract}
Current methods for the interpretability of discriminative deep neural networks commonly rely on
the model's input-gradients, i.e., the gradients of the output logits w.r.t. the inputs. The
common assumption is that these input-gradients contain information regarding $p_{\theta} ( y
\mid \X )$, the model's discriminative capabilities, thus justifying their use for
interpretability. However, in this work we show that these input-gradients can be arbitrarily
manipulated as a consequence of the shift-invariance of softmax without changing the
discriminative function. This leaves an open question: if input-gradients can be arbitrary, why
are they highly structured and explanatory in standard models?

We investigate this by re-interpreting the logits of standard softmax-based classifiers as unnormalized
log-densities of the data distribution and show that input-gradients can be viewed as gradients
of a class-conditional density model $p_{\theta}(\X \mid y)$ implicit within the discriminative
model. This leads us to hypothesize that the highly structured and explanatory nature of
input-gradients may be due to the alignment of this class-conditional model $p_{\theta}(\X \mid
y)$ with that of the ground truth data distribution $p_{\text{data}} (\X \mid y)$. We test this
hypothesis by studying the effect of density alignment on gradient explanations. To achieve this
density alignment, we use an algorithm called score-matching, and propose novel approximations to
this algorithm to enable training large-scale models.

Our experiments show that improving the alignment of the implicit density model with the data
distribution enhances gradient structure and explanatory power while reducing this alignment has
the opposite effect. This also leads us to conjecture that unintended density alignment in
standard neural network training may explain the highly structured nature of input-gradients
observed in practice. Overall, our finding that input-gradients capture information regarding an
implicit generative model implies that we need to re-think their use for interpreting
discriminative models.

\end{abstract}

\section{Introduction}

Input-gradients, or gradients of outputs w.r.t. inputs, are commonly used for the interpretation
of deep neural networks \citep{simonyan2013deep}. For image classification tasks, an input pixel
with a larger input-gradient magnitude is attributed a higher `importance' value, and the resulting
maps are observed to agree with human intuition regarding which input pixels are important for
the task at hand \citep{adebayo2018sanity}. Quantitative studies \citep{samek2016evaluating,
shrikumar2017learning} also show that these importance estimates are meaningful in predicting
model response to larger structured perturbations. These results suggest that input-gradients do
indeed capture relevant information regarding the underlying model. However in this work, we show that
input-gradients can be arbitrarily manipulated using the shift-invariance of softmax without
changing the underlying discriminative model, which calls into question the reliability of 
input-gradient based attribution methods for interpreting arbitrary black-box models.

Given that input-gradients can be arbitrarily structured, the reason for their highly structured
and explanatory nature in standard pre-trained models is puzzling. Why are input-gradients
relatively well-behaved when they can just as easily be arbitrarily structured, without affecting
discriminative model performance? What factors influence input-gradient structure in standard
deep neural networks?

To answer these, we consider the connections made between softmax-based discriminative
classifiers and generative models \citep{bridle1990probabilistic,grathwohl2020Your}, made by
viewing the logits of standard classifiers as un-normalized log-densities. This connection
reveals an alternate interpretation of input-gradients, as representing the log-gradients of a
class-conditional density model which is implicit within standard softmax-based deep models,
which we shall call the \textit{implicit density model}. This connection compels us to consider
the following hypothesis: perhaps input-gradients are highly structured because this implicit
density model is aligned with the `ground truth' class-conditional data distribution? The core of
this paper is dedicated to testing the validity of this hypothesis, whether or not
input-gradients do become more structured and explanatory if this alignment increases and vice
versa.

For the purpose of validating this hypothesis, we require mechanisms to increase or decrease the
alignment between the implicit density model and the data distribution. To this end, we consider
a generative modelling approach called score-matching, which reduces the density
modelling problem to that of local geometric regularization. Hence by using score-matching, we
are able to view commonly used geometric regularizers in deep learning as density modelling
methods. In practice, the score-matching objective is known for being computationally expensive and
unstable to train \citep{song2019generative, kingma2010regularized}. To this end, we also introduce 
approximations and regularizers which allow us to use score-matching on practical large-scale 
discriminative models. 

This work is broadly connected to the literature around unreliability of saliency methods. While most
such works consider how the explanations for nearly identical images can be arbitrarily different
\citep{domrowski2019explanations, Subramanya_2019_ICCV, zhang2020interpretable,
ghorbani2019interpretation}, our work considers how one may change the model itself to yield
arbitrary explanations without affecting discriminative performance. This is similar to 
\cite{heo2019fooling} who show this experimentally, whereas we provide an
analytical reason for why this happens relating to the shift-invariance of softmax.

The rest of the paper is organized as follows. We show in \S~\ref{sec:fooling} that it is trivial
to manipulate input-gradients of standard classifiers using the shift-invariance of softmax
without affecting the discriminative model. In \S~\ref{sec:score-matching} we state our main
hypothesis and describe the details of score-matching, present a tractable approximation for the
same that eliminates the need for expensive Hessian computations. \S~\ref{sec:connections}
revisits other interpretability tools from a density modelling perspective. Finally,
\S~\ref{sec:experiments} presents experimental evidence for the validity of the hypothesis that
improved alignment between the implicit density model and the data distribution can improve the
structure and explanatory nature of input-gradients.

\section{Input-Gradients are not Unique} 
\label{sec:fooling}

In this section, we show that it is trivial to manipulate input-gradients of discriminative deep
networks, using the well-known shift-invariance property of softmax. Here we shall make a
distinction between two types of input-gradients: \textit{logit-gradients} and
\textit{loss-gradients}. While logit-gradients are gradients of the pre-softmax output of a given
class w.r.t. the input, loss-gradients are the gradients of the loss w.r.t. the input. In both
cases, we only consider outputs of a single class, usually the target class.

Let $\X \in \R^D$ be a data point, which is the input for a neural network model $f: \R^D
\rightarrow \R^C$ intended for classification, which produces pre-softmax logits for $C$ classes.
The cross-entropy loss function for some class $1 \leq i \leq C, ~ i \in \mathbb{N}$
corresponding to an input $\X$ is given by $\ell(f(\X), i) \in \R_+$, which is shortened to
$\ell_{i}(\X)$ for convenience. Note that here the loss function subsumes the softmax function as
well. The logit-gradients are given by $\nabla_{\X} f_i(\X) \in \R^D$ for class $i$, while
loss-gradients are $\nabla_{\X} \ell_i(\X) \in \R^D$. Let the softmax function be $p(y=i|\X) =
{\exp(f_i(\X))} / {\sum_{j=1}^C \exp(f_j(\X))} $, which we denote as $p_i$ for
simplicity. Here, we make the observation that upon adding the same scalar function $g$ to all logits, the logit-gradients can arbitrarily change but the loss values do not.

\begin{observation}
Assume an arbitrary function $g: \R^D \rightarrow \R$. Consider another neural network function given by 
$\tilde{f}_i(\cdot) = f_i(\cdot) + g(\cdot), ~\text{for}~ 0 \leq i \leq C$, for which we obtain $\grad \tilde{f}_i(\cdot) = \grad f_i(\cdot) + \grad g(\cdot)$. For this, the corresponding loss values and loss-gradients are unchanged, i.e.; $\tilde{\ell}_i(\cdot) = \ell_i(\cdot)$ and $\grad \tilde{\ell}_i(\cdot) = \grad \ell_i(\cdot)$ as a consequence of the shift-invariance of softmax.
\end{observation}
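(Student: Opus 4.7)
The plan is to reduce the observation to the classical shift-invariance identity for softmax, which turns the whole argument into a short algebraic computation.

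First, I would write out the softmax probabilities induced by the shifted logits $\tilde f_i = f_i + g$ and factor the common term $\exp(g(\X))$ out of both numerator and denominator:
\begin{equation*}
\tilde p_i(\X) \;=\; \frac{\exp(f_i(\X) + g(\X))}{\sum_{j=1}^C \exp(f_j(\X) + g(\X))} \;=\; \frac{\exp(g(\X))\,\exp(f_i(\X))}{\exp(g(\X))\,\sum_{j=1}^C \exp(f_j(\X))} \;=\; p_i(\X).
\end{equation*}
This is the shift-invariance of softmax and is the only nontrivial step.

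Second, since the cross-entropy loss $\ell_i(\X)$ depends on $\X$ only through the softmax probabilities $\{p_j(\X)\}_{j=1}^C$ and the target label $i$ (concretely, $\ell_i(\X) = -\log p_i(\X)$), the equality $\tilde p_i(\X) = p_i(\X)$ immediately gives $\tilde\ell_i(\X) = \ell_i(\X)$ as functions of $\X$. Taking $\nabla_{\X}$ of both sides of this functional identity then yields $\grad \tilde\ell_i(\X) = \grad \ell_i(\X)$.

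Finally, the statement about logit-gradients, $\grad \tilde f_i(\X) = \grad f_i(\X) + \grad g(\X)$, follows directly from the linearity of differentiation applied to $\tilde f_i = f_i + g$, and shows that the logit-gradient can be shifted arbitrarily by choosing $g$, in contrast to the loss-gradient. There is no real obstacle here; the only thing to be careful about is to make the domain of $g$ explicit (an arbitrary differentiable $g:\R^D \to \R$, so that $\grad g$ exists) and to note that the same scalar $g(\X)$ must be added to every logit, since it is precisely this "common shift" that softmax cancels.
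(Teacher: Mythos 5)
Your proof is correct and rests on the same core idea as the paper's: the shift-invariance cancellation of $\exp(g(\X))$ in the softmax. The only (minor) organizational difference is that you establish the functional identity $\tilde{\ell}_i = \ell_i$ first and obtain $\grad\tilde{\ell}_i = \grad\ell_i$ for free by differentiating it, whereas the paper writes out explicit formulas for both $\ell_i(\X) = -f_i(\X) + \log\sum_j \exp(f_j(\X))$ and $\grad\ell_i(\X) = -\grad f_i(\X) + \sum_j p_j\,\grad f_j(\X)$ and verifies invariance of each under $f_i \mapsto f_i + g$ by substitution; your route is slightly more economical.
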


This explains how the structure of logit-gradients can be arbitrarily changed: one simply needs to
add an arbitrary function $g$ to all logits. This implies that individual logit-gradients $\grad
f_i(\X)$ and logits $f_i(\X)$ are meaningless on their own, and their structure may be uninformative
regarding the underlying discriminative model. Despite this, a large fraction of work in
interpretable deep learning \citep{simonyan2013deep, selvaraju2017grad, smilkov2017smoothgrad,
fong2019understanding,srinivas2019full} uses individual logits and logit-gradients for saliency map
computation. We also provide a similar illustration in the supplementary material for the case of
loss-gradients, where we show that it is possible for loss-gradients to diverge significantly
even when the loss values themselves do not. 

These simple observations leave an open question: why are input-gradients highly structured and
explanatory when they can just as easily be arbitrarily structured, without affecting
discriminative model performance? Further, if input-gradients do not depend strongly on the
underlying discriminative function, what aspect of the model do they depend on instead? In the
section that follows, we shall consider a generative modelling view of discriminative
neural networks that offers insight into the information encoded by logit-gradients.

\section{Implicit Density Models Within Discriminative Classifiers}
\label{sec:score-matching}

Let us consider the following link between generative models and the softmax function. We first
define the following joint density on the logits $f_i$ of classifiers: $p_{\theta}(\X, y=i) =
\frac{\exp(f_i(\X; \theta))}{Z(\theta)} $, where $Z(\theta)$ is the partition function. We shall
henceforth suppress the dependence of $f$ on $\theta$ for brevity. Upon using Bayes' rule to
obtain $p_{\theta}(y=i \mid \X)$, we observe that we recover the standard softmax function. Thus
the logits of discriminative classifiers can alternately be viewed as un-normalized log-densities
of the joint distribution. Assuming equiprobable classes, we have $p_{\theta}(\X \mid y = i) =
\frac{\exp(f_i(\X))}{Z(\theta)/C}$, which is the quantity of interest for us. Thus while the
logits represent un-normalized log-densities, logit-gradients represent the score function, i.e.;
$\nabla_x \log p_{\theta} (\X \mid y=i) = \nabla_x f_i(\X)$, which avoids dependence on the
partition function $Z(\theta)$ as it is independent of $\X$.

This viewpoint naturally leads to the following hypothesis, that perhaps the reason for the
highly structured and explanatory nature of input-gradients is that the implicit density model
$p_{\theta}(\X \mid y)$ is close to that of the ground truth class-conditional data distribution
$p_\text{data}(\X \mid y)$? We propose to test this hypothesis explicitly using score-matching 
as a density modelling tool.

\begin{hypothesis}
   \label{hyp:hypothesis}
(Informal) Improved alignment of the implicit density model to the ground
truth class-conditional density model improves input-gradient interpretability via both qualitative
and quantitative measures, whereas deteriorating this alignment has the opposite effect.
\end{hypothesis}

\subsection{Score-Matching}

Score-matching \citep{hyvarinen2005estimation} is a generative modelling objective that focusses
solely on the derivatives of the log density instead of the density itself, and thus does not
require access to the partition function $Z(\theta)$. Specifically, for our case we have $\grad
\log p_{\theta}(\X \mid y= i) = \grad f_i(\X)$, which are the logit-gradients.

Given i.i.d. samples $\mathcal{X} = \{x_i \in \R^D \}$ from a latent data distribution
$p_{data}(\X)$, the objective of generative modelling is to recover this latent distribution
using only samples $\mathcal{X}$. This is often done by training a parameterized distribution
$p_{\theta}(\X)$ to align with the latent data distribution $p_{data}(\X)$. The score-matching objective 
instead aligns the gradients of log densities, as given below.

\begin{eqnarray}
    J(\theta) &=& \E_{p_{data}(\X)} \frac{1}{2}\| \grad \log p_{\theta}(\X) - \grad \log p_{data}(\X) \|^2_2 \label{eqn:score-matching}\\
    &=& \E_{p_{data}(\X)} \left( \text{trace}(\grad^2 \log p_{\theta}(\X))  + \frac{1}{2} \| \grad \log p_{\theta}(\X) \|^2_2 \right) + \mathtt{const} \label{eqn:score-matching-hessian}
\end{eqnarray}

The above relationship is proved \citep{hyvarinen2005estimation} using integration by parts. This
is a consistent objective, \emph{i.e,} $J(\theta) = 0 \iff p_{data} = p_{\theta}$. This approch
is appealing also because this reduces the problem of generative modelling to that of
regularization of the local geometry of functions, i.e.; the resulting terms only depend on the
point-wise gradients and Hessian-trace.

\subsection{Efficient estimation of Hessian-trace}

In general, equation \ref{eqn:score-matching-hessian} is
intractable for high-dimensional data due to the Hessian trace term. To address this, we
can use the Hutchinson's trace estimator \citep{hutchinson1990stochastic} to efficiently compute
an estimate of the trace by using random projections, which is given by: $\text{trace}(\grad^2
\log p_{\theta}(\X)) = \E_{\f{v} \sim \mathcal{N}(0,\mathtt{I})} ~\f{v}^\mathtt{T} ~\grad^2 \log
p_{\theta}(\X) ~ \f{v} $. This estimator has been previously applied to score-matching
\citep{song2019sliced}, and can be computed efficiently using Pearlmutter's trick
\citep{pearlmutter1994fast}. However, this trick still requires \textbf{two backward passes} for a
single monte-carlo sample, which is computationally expensive. To further improve computational
efficiency, we introduce the following approximation to Hutchinson's estimator using a Taylor
series expansion, which applies to small values of $\sigma \in \R$.

  \newcommand{\Ev}{\E_{\f{v} \sim \mathcal{N}(0,\sigma^2 \mathtt{I})}}

 \begin{eqnarray}
    \E_{\f{v} \sim \mathcal{N}(0,\mathtt{I})} ~\f{v}^\mathtt{T} \grad^2 \log p_{\theta}(\X) \f{v}  
    %\frac{1}{\sigma^2}  ~ \Ev \f{v}^\mathtt{T} \grad^2 \log p_{\theta}(\X) \f{v} \nonumber
    %\\
    & \approx & \frac{2}{\sigma^2} ~\Ev \left( \log p_{\theta}(\X + \f{v}) - \log p_{\theta}(\X) - \nabla_{x} \log p_{\theta}(\X)^\mathtt{T} \f{v} \right) \nonumber \\
    & = & \frac{2}{\sigma^2} ~\Ev \left( \log p_{\theta} (\X + \f{v}) - \log p_{\theta} (\X) \right) \label{eqn:taylor-trick}
 \end{eqnarray}

Note that equation \ref{eqn:taylor-trick} involves a difference of log probabilities, which is
independent of the partition function. For our case, $\log p_{\theta} (\X + \f{v} | y=i) - \log
p_{\theta} (\X | y=i) = f_i(\X + \f{v}) - f_i(\X)$. We have thus considerably simplified and
speeded-up the computation of the Hessian trace term, which now can be approximated with
\textbf{no backward passes}, but using only a single additional forward pass. We present 
details regarding the variance of this estimator in the supplementary material. A concurrent approach \citep{pang2020efficient} also presents a similar algorithm, however it is applied 
primarily to Noise Contrastive Score Networks \citep{song2019generative} and Denoising Score Matching 
\citep{vincent2011connection}, whereas we apply it to vanilla score-matching on discriminative models.

\subsection{Stabilized Score-matching} 
\label{sec:stability}
In practice, a naive application of score-matching objective is unstable, causing the
Hessian-trace to collapse to negative infinity. This occurs because the finite-sample variant of
equation \ref{eqn:score-matching} causes the model to `overfit' to a mixture-of-diracs
density, which places a dirac-delta distribution at every data point. Gradients of such a distribution
are undefined, causing training to collapse. To overcome this, regularized
score-matching \citep{kingma2010regularized} and noise conditional score networks
\citep{song2019generative} propose to add noise to inputs for score-matching to make the problem 
well-defined. However, this did not help for our case. Instead, we use a heuristic where 
we add a small penalty term proportional to the square of the Hessian-trace. This discourages the 
Hessian-trace becoming too large, and thus stabilizes training. 

%Similar to these works, we also add noise with standard deviation $\tau$ to the inputs
%during score-matching, and find that it helps stabilize training.

\section{Implications of the Density Modelling Viewpoint}
\label{sec:connections}
In the previous section we related input-gradients to the implicit density model, thus linking
gradient interpretability to density modelling through our hypothesis. In this section, we
consider two other interpretability tools: activity maximization and the pixel perturbation test, and
show how these can interpreted from a density modelling perspective. These perspectives also
enable us to draw parallels between score-matching and adversarial training.

\subsection{Activity Maximization as Sampling from the Implicit Density Model}
The canonical method to obtain samples from score-based generative models is via Langevin
sampling \citep{welling2011bayesian, song2019generative}, which involves performing
gradient ascent on the density model with noise added to the gradients. Without this added noise,
the algorithm recovers the modes of the density model.

We observe that activity maximization algorithms used for neural network visualizations are
remarkably similar to this scheme. For instance, \cite{simonyan2013deep} recover inputs which
maximize the logits of neural networks, thus exactly recovering the modes of the implicit density
model. Similarly, deep-dream-like methods \citep{mahendran2016visualizing,
nguyen2016synthesizing, mordvintsev2015inceptionism} extend this by using ``image priors'' to
ensure that the resulting samples are closer to the distribution of natural images, and by adding
structured noise to the gradients in the form of jitter, to obtain more visually pleasing
samples. From the density modelling perspective, we can alternately view these visualization
techniques as biased sampling methods for score-based density models trained on natural images.
However, given the fact that they draw samples from the implicit density model, their utility in
interpreting discriminative models may be limited.

\subsection{Pixel Perturbation as a Density Ratio Test}
\label{subsec:pixelpert}
A popular test for saliency map evaluation is based on pixel perturbation
\citep{samek2016evaluating}. This involves first selecting the least-relevant (or most-relevant)
pixels according to a saliency map representation, `deleting' those pixels and measuring
the resulting change in output value. Here, deleting a pixel usually involves replacing the pixel
with a non-informative value such as a random or a fixed constant value. A good saliency method
identifies those pixels as less relevant whose deletion does not cause a large change in output
value.

We observe that this change in outputs criterion is identical to the density ratio,
\latin{i.e.}, $\log \left( p_{\theta} (\X + \f{v} | y=i) / p_{\theta} (\X | y=i) \right) = f_i(\X
+ \f{v}) - f_i(\X)$. Thus when logits are used for evaluating the change in outputs
\citep{samek2016evaluating, ancona2018towards}, the pixel perturbation test exactly measures the
density ratio between the perturbed image and the original image. Thus if a perturbed image has a
similar density to that of the original image under the implicit density model, then the saliency
method that generated these perturbations is considered to be explanatory. Similarly,
\cite{fong2019understanding} optimize over this criterion to identify pixels whose removal causes
minimal change in logit activity, thus obtaining perturbed images with a high implicit density
value similar to that of activity maximization. Overall, this test captures sensitivity of the
implicit density model, and not the underlying discriminative model which we wish to interpret.
We thus recommend that the pixel perturbation test always be used in conjunction with either the
change in output probabilities, or the change in the accuracy of classification, rather than 
the change in logits.

\subsection{Connecting Score-Matching to Adversarial Training}

Recent works in adversarial machine learning \citep{etmann2019connection, engstrom2019adversarial,
santurkar2019image, kaur2019perceptually, ross2017improving} have observed that saliency map
structure and samples from activation maximization are more perceptually aligned for
adversarially trained models than for standard models. However it is unclear from these works why
this occurs. Separate from this line of work, \cite{chalasani2018concise} also connect regularization of a 
variant of integrated gradients with adversarial training, suggesting a close interplay between the two. 

We notice that these properties are shared with score-matched models, or models
trained such that the implicit density model is aligned with the ground truth. Further, we note
that both score-matching and adversarial training are often based on local geometric
regularization, usually involving regularization of the gradient-norm \citep{ross2017improving,
jakubovitz2018improving}, and training both the discriminative model and the implicit density
model \citep{grathwohl2020Your} has been shown to improve adversarial robustness. From these
results, we can conjecture that training the implicit density model via score-matching may have
similar outcomes as adversarial training. We leave the verification and proof of this conjecture 
to future work.

\section{Experiments}
\label{sec:experiments}

In this section, we present experimental results to show the efficacy of score-matching and the
validation of the hypothesis that density alignment influences the gradient explanation quality. For
experiments, we shall consider the CIFAR100 dataset. We present experiments with CIFAR10 in the
supplementary section. Unless stated otherwise, the network structure we use shall be a 18-layer
ResNet that achieves 78.01\% accuracy on CIFAR100, and the optimizer used shall be SGD with
momentum. All models use the softplus non-linearity with $\beta = 10$, which is necessary to
ensure that the Hessian is non-zero for score-matching. Before proceeding with our experiments,
we shall briefly introduce the score-matching variants we shall be using for comparisons.
 
\paragraph{Score-Matching}
We propose to use the score-matching objective as a regularizer in neural network training to
\textbf{increase} the alignment of the implicit density model to the ground truth, as shown in
equation \ref{eqn:final_loss}, with the stability regularizer discussed in \S
\ref{sec:stability}. For this, we use a regularization constant $\lambda = 1e-3$. This model achieves 
$72.20\%$ accuracy on the test set, which is a drop of about $5.8\%$ compared to the original model. 
In the supplementary material, we perform a thorough hyper-parameter sweep and show that it is possible 
to obtain better performing models.

\begin{gather}
    h(\X) :=\frac{2}{\sigma^2} \Ev \left( f_i(\X + \f{v}) - f_i(\X) \right)  \nonumber \\
    \underbrace{\ell_{reg}(f(\X), i)}_{\text{regularized loss}} = \underbrace{\ell(f(\X), i)}_{\text{cross-entropy}} + \lambda \left( \underbrace{ \overbrace{h(\X)}^{\text{Hessian-trace}}  + \frac{1}{2} \overbrace{\| \grad f_i(\X) \|^2_2}^{\text{gradient-norm}}}_{\text{score-matching}} + \underbrace{\overbrace{\mu}^{10^{-4}} h^2(\X)}_{\text{stability regularizer}} \right) \label{eqn:final_loss}
\end{gather}

\paragraph{Anti-score-matching}
We would like to have a tool that can \textbf{decrease} the alignment between the implicit
density model and the ground truth. To enable this, we propose to maximize the hessian-trace, in
an objective we call \textit{anti-score-matching}. For this, we shall use a the clamping function
on hessian-trace, which ensures that its maximization stops after a threshold is reached. We use
a threshold of $\tau = 1000$, and regularization constant $\lambda = 1e-4$. This model achieves
an accuracy of $74.87\%$.

\paragraph{Gradient-Norm regularization} 
We propose to use gradient-norm regularized models as
another baseline for comparison, using a regularization constant of $\lambda = 1e-3$. This model
achieves an accuracy of $76.60\%$.

\subsection{Evaluating the Efficacy of Score-Matching and Anti-Score-Matching}
Here we demonstrate that training with score-matching / anti-score-matching is possible,
and that such training improves / deteriorates the quality of the implicit density models respectively
as expected.

\subsubsection{Density Ratios}

One way to characterize the generative behaviour of models is to compute likelihoods on data points.
However this is intractable for high-dimensional problems, especially for un-normalized models.
We observe although that the densities $p(\X \mid y=i)$ themselves are intractable, we can easily
compute density ratios $p(\X + \eta \mid y=i) / p(\X \mid y=i) = \exp(f_i(\X + \eta) - f_i(\X))$
for a random noise variable $\eta$. Thus, we propose to plot the graph of density ratios locally
along random directions. These can be thought of as local cross-sections of the density sliced at
random directions. We plot these values for gaussian noise $\eta$ for different standard
deviations, which are averaged across points in the entire dataset.

\begin{wrapfigure}{L}{0.45\textwidth}
    \centering
    \includegraphics[width=0.9\linewidth, trim={0.4cm 0cm 0cm 0cm}, clip]{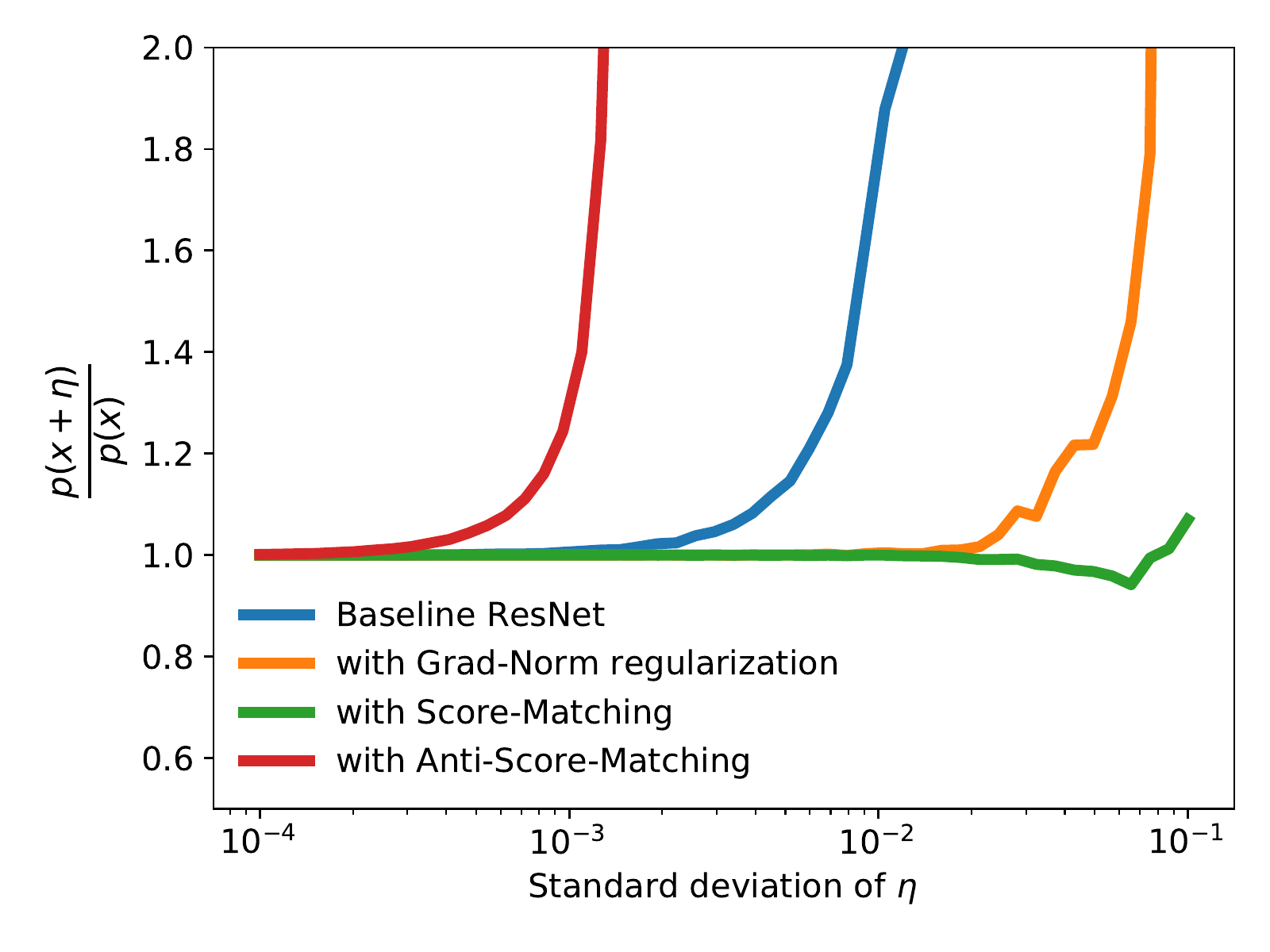}
    \caption{Plots of density ratios representing local density profiles across varying levels of
    noise added to the input (lower is better). We observe that score-matched model is robust to
    a larger range of noise values, while anti-score-matching is very sensitive even to small amounts
    of noise.}
    \label{fig:density-reg}
\end{wrapfigure}

In Figure \ref{fig:density-reg}, we plot the density ratios upon training on the CIFAR100
dataset. We observe that the baseline model assigns \textit{higher} density values to noisy
inputs than real inputs. With anti-score-matching, we observe that the density profile grows
still steeper, assigning higher densities to inputs with smaller noise. Gradient-norm regularized
models and score-matched models improve on this behaviour, and are robust to larger amounts of
noise added. Thus we are able to obtain penalty terms that can both improve and deteriorate the
density modelling behaviour within discriminative models.

\subsubsection{Sample Quality}
\label{subsec:sample-quality}

We are interested in recovering modes of our density models while having access to only the
gradients of the log density. For this purpose, we apply gradient ascent on the log probability
$\log p(\X \mid y=i) = f_i(\X)$, similar to activity maximization. Our results are shown in
Figure \ref{fig:samples}. We observe that samples from the score-matched and gradient-norm
regularized models are significantly less noisy than other models.

We also propose to qualitatively measure the sample quality using the GAN-test approach
\citep{shmelkov2018good}. This test proposes to measure the discriminative accuracy of generated
samples via an independently trained discriminative model. In contrast with more popular metrics
such as the inception-score, this captures sample quality rather than diversity, which is what we
are interested in. We show the results in table \ref{tab:sample_accuracy}, which confirms the
qualitative trend seen in samples above. Surprisingly, we find that gradient-norm regularized models 
perform better than score-matched models. This implies that such models are
able to implicitly perform density modelling without being explicitly trained to do so.
We leave further investigation of this phenomenon to future work. %We also independently observe
%that samples from softplus trained models tend to perform significantly better than those
%obtained from ReLU models. We believe that the improved functional smoothness in softplus models
%contributes to this effect.

\begin{table}%{L}{0.5\textwidth}
\begin{center}
  \begin{tabular}{ c|c } 
   \textbf{Model} & \textbf{GAN-test (\%)} \\ 
   \hline
   Baseline ResNet & 59.47  \\
   + Anti-Score-Matching & 16.40  \\
   + Gradient Norm-regularization & \textbf{80.07}  \\
   + Score-Matching & 72.75 
  \end{tabular}
  \end{center}
  \caption{GAN-test scores (higher is better) of class-conditional samples generated from various ResNet-18 models (see \S~\ref{subsec:sample-quality}). We observe that samples from gradient-norm regularized models and score-matched 
  models achieve much better accuracies than the baselines and anti-score-matched models.}
  \label{tab:sample_accuracy}
\end{table}

\begin{figure}
    \centering
    \captionsetup[subfigure]{justification=centering}
    \begin{subfigure}[t]{0.2\textwidth}
        \centering
        \includegraphics[width=0.9\linewidth]{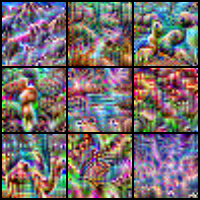}
        \caption{Baseline ResNet}
    \end{subfigure}%
    \begin{subfigure}[t]{0.2\textwidth}
        \centering
        \includegraphics[width=0.9\linewidth]{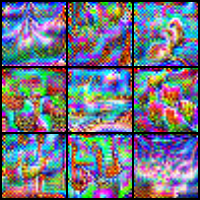}
        \caption{With anti score-matching}
    \end{subfigure}%
    \begin{subfigure}[t]{0.2\textwidth}
        \centering
        \includegraphics[width=0.9\linewidth]{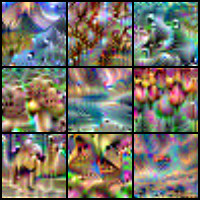}
        \caption{With Gradient-norm regularization}
    \end{subfigure}%
    \begin{subfigure}[t]{0.2\textwidth}
        \centering
        \includegraphics[width=0.9\linewidth]{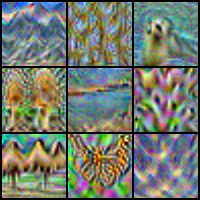}
        \caption{With score-matching}
    \end{subfigure}
\caption{Samples generated from various models by performing gradient ascent on random inputs (see \S~-\ref{subsec:sample-quality}).
While none of the generated samples are realistic, samples obtained from score-matched and
gradient-norm regularized models are smoother and less noisy.}

\label{fig:samples}
\end{figure}

\subsection{Evaluating the Effect of Density Alignment on Gradient Explanations}

Here we shall evaluate the gradient explanations of various models. First,
we shall look at quantitative results on a discriminative variant of the pixel perturbation test.
Second, we visualize the gradient maps to assess qualitative differences between them.

\subsubsection{Quantitative Results on Discriminative Pixel Perturbation}
\label{subsec:pixpert-results}
As noted in \ref{subsec:pixelpert}, it is recommended to use the pixel perturbation test using
accuracy changes, and we call this variant as \textit{discriminative pixel perturbation}. We
select the least relevant pixels and replace them with the mean pixel value of the image, note
down the accuracy of the model on the resulting samples. We note that this test is only used so
far to compare different saliency methods for the same underlying model. However, we here seek to
compare saliency methods across models. For this we consider two experiments. First, we perform
the pixel perturbation experiment with each of the four trained models on their own
input-gradients and plot the results in Figure \ref{fig:pixpert-seperate}. These results indicate that
the input-gradients of score-matched and gradient-norm regularized models are better equipped to
identify least relevant pixels in
this model. However, it is difficult to completely disentangle the robustness benefits of such
score-matched models against improved identification of less relevant pixels through such a plot.

To this end, we conduct a second experiment in Figure \ref{fig:pixpert-proxy}, where we use
input-gradients obtained from these four trained models to explain the same standard baseline
ResNet model. This disentangles the robustness of different models as inputs to the same model is
perturbed in all cases. Here also we find that gradients from score-matched and gradient-norm
regularized models explain behavior of standard baseline models better than the gradients of the
baseline model itself. Together, these tests show that training with score-matching indeed
produces input-gradients that quantitatively more
explanatory than baseline models.

\begin{figure}
   \captionsetup[subfigure]{justification=centering}
   \centering
   \begin{subfigure}{0.4\textwidth}
      \includegraphics[width=0.98\textwidth]{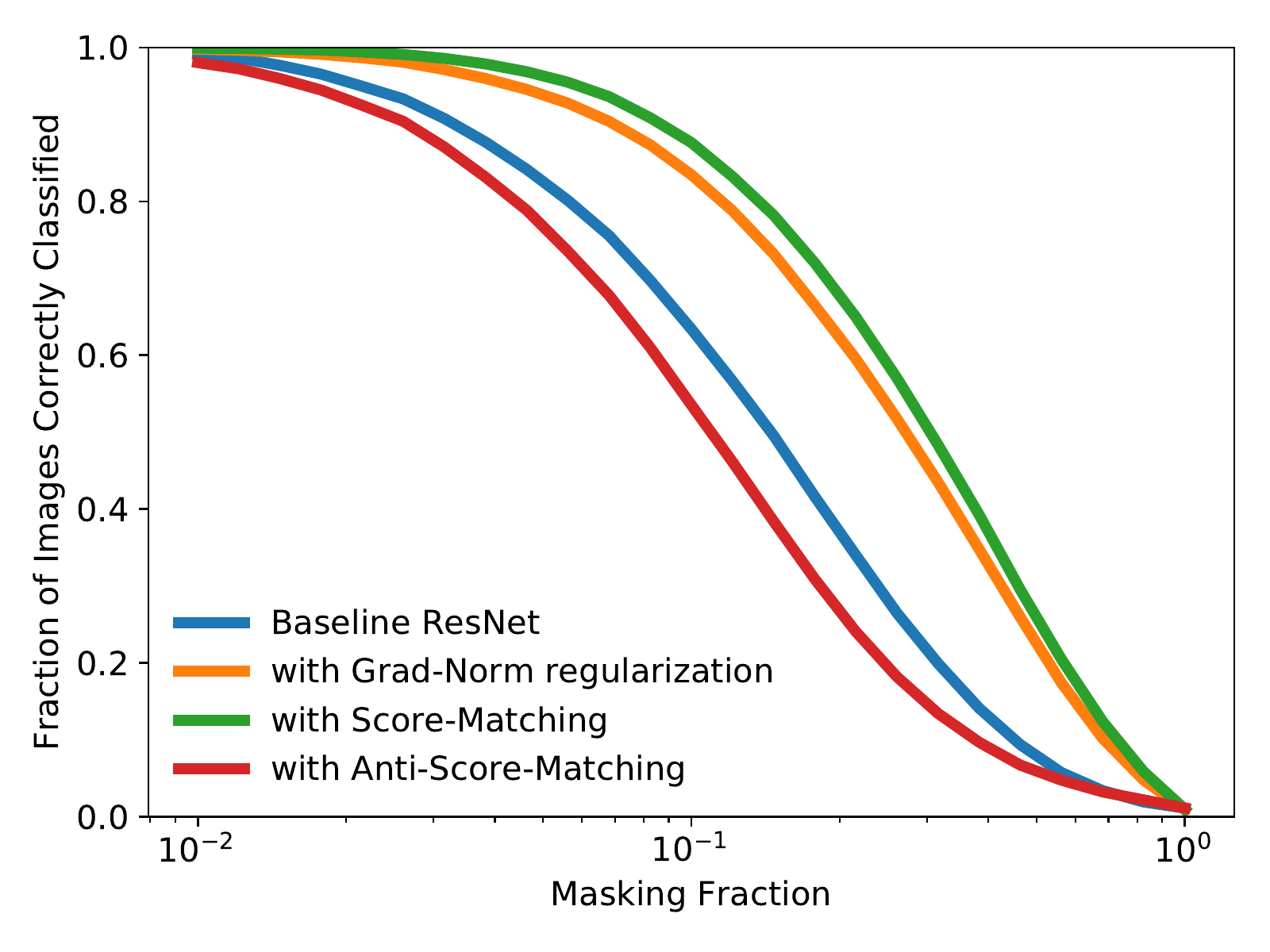}
      \caption{Models evaluated with their own gradients}
      \label{fig:pixpert-seperate}
   \end{subfigure}%
   \begin{subfigure}{0.4\textwidth}
      \includegraphics[width=0.98\textwidth]{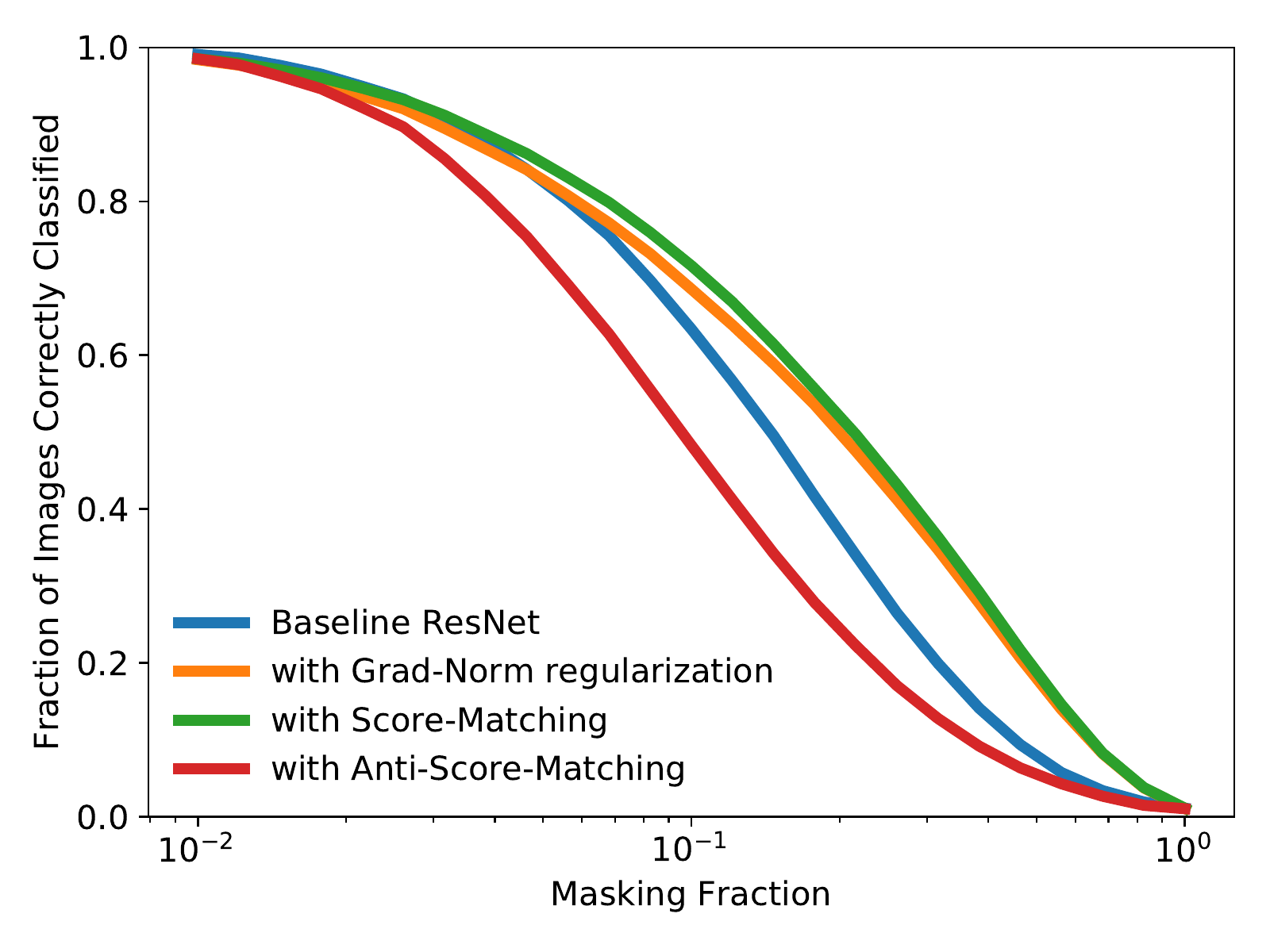}
      \caption{Baseline ResNet evaluated with gradients of \textit{different} models}
      \label{fig:pixpert-proxy}
   \end{subfigure}
   \caption{Discriminative pixel perturbation results (higher is better) on the CIFAR100 dataset (see \S~\ref{subsec:pixpert-results}).
   We see that score-matched and gradient-norm regularized models best explain model behaviour in both 
   cases, while the anti-score-matched model performs the worst. This agrees with the hypothesis (stated in \S~\ref{sec:score-matching}) that 
   alignment of implicit density models improves gradient explanations and vice versa.}

\end{figure}

\subsubsection{Qualitative Gradient Visualizations}

We visualize the structure of logit-gradients of different models in Figure \ref{fig:gradients}.
We observe that gradient-norm regularized model and score-matched model have highly perceptually
aligned gradients, when compared to the baseline and anti-score-matched gradients, corroborating
the quantitative results.

\begin{figure}[h]
    \captionsetup[subfigure]{justification=centering}
    \centering
    \begin{subfigure}[t]{0.18\textwidth}
        \centering
        \includegraphics[width=0.9\linewidth]{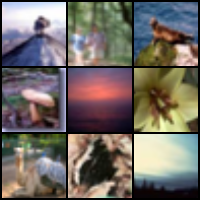}
        \caption{Input Image}
    \end{subfigure}%
    \begin{subfigure}[t]{0.18\textwidth}
        \centering
        \includegraphics[width=0.9\linewidth]{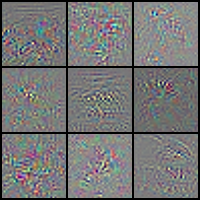}
        \caption{Baseline ResNet}
    \end{subfigure}%
    \begin{subfigure}[t]{0.18\textwidth}
        \centering
        \includegraphics[width=0.9\linewidth]{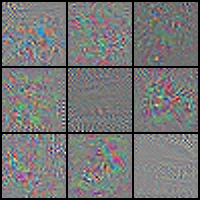}
        \caption{With Anti score-matching}
    \end{subfigure}%
    \begin{subfigure}[t]{0.18\textwidth}
        \centering
        \includegraphics[width=0.9\linewidth]{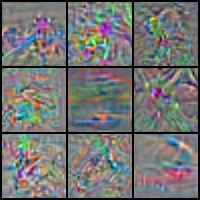}
        \caption{With Gradient-norm regularization}
    \end{subfigure}%
    \begin{subfigure}[t]{0.18\textwidth}
        \centering
        \includegraphics[width=0.9\linewidth]{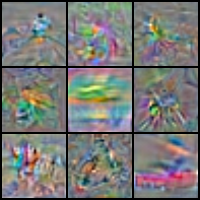}
        \caption{With Score-matching}
    \end{subfigure}
\caption{Visualization of input-gradients of different models. We observe that gradients of score-matched
and gradient-norm regularized models are more perceptually aligned than the others, with the gradients of the anti-score-matched model being the noisiest.
This qualitatively verifies the hypothesis stated in \S~\ref{sec:score-matching}.}
\label{fig:gradients}
\end{figure}

\section{Conclusion}

In this paper, we investigated the cause for the highly structured and explanatory nature of
input-gradients in standard pre-trained models, and showed that alignment of the implicit density
model with the ground truth data density is a possible cause. This density modelling
interpretation enabled us to view canonical approaches in interpretability such as gradient-based
saliency methods, activity maximization and the pixel perturbation test through a density
modelling perspective, showing that these capture information relating to the implicit density
model, not the underlying discriminative model which we wish to interpret. This calls for a need
to re-think the role of these tools in the interpretation of discriminative models. For practitioners, 
we believe it is best to avoid usage of logit gradient-based tools, for interpretability. If unavoidable, 
it is recommended to use only gradient-norm regularized or score-matched models, as input-gradients of these 
models produce more reliable estimates of the gradient of the underlying distribution. As our experiments 
show, these may be a useful tool even though they are not directly related to the discriminative model.

However, our work still does not answer the question of why pre-trained models may have their
implicit density models aligned with ground truth in the first place. One possible reason could be the 
the presence of an implicit gradient norm regularizer in standard SGD, similar to that shown 
independently by \cite{barrett2020implicit}. Another open question is to understand why
gradient-norm regularized models are able to perform implicit density modelling as observed in
our experiments in \S~\ref{subsec:sample-quality}, which lead to improved gradient explanations.

\bibliography{references}
\bibliographystyle{iclr2021_conference}

\clearpage

\appendix
\part*{Appendix}

\section{Fooling Gradients is simple}

\begin{observation}
    Assume an arbitrary function $g: \R^D \rightarrow \R$. Consider another neural network function given by 
    $\tilde{f}_i(\cdot) = f_i(\cdot) + g(\cdot), ~\text{for}~ 0 \leq i \leq C$, for which we obtain $\grad \tilde{f}_i(\cdot) = \grad f_i(\cdot) + \grad g(\cdot)$. For this, the corresponding loss values and loss-gradients are unchanged, i.e.; $\tilde{\ell}_i(\cdot) = \ell_i(\cdot)$ and $\grad \tilde{\ell}_i(\cdot) = \grad \ell_i(\cdot)$.
\end{observation}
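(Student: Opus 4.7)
The plan is to verify this statement by a direct computation exploiting the shift-invariance of the softmax function, followed by a routine appeal to the fact that equality of differentiable functions on $\R^D$ forces equality of their gradients. There is essentially no obstacle here; the content is entirely contained in a one-line algebraic identity about softmax.

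First, I would write down the modified softmax probabilities explicitly. Letting $\tilde{p}_i(\X) = \exp(\tilde{f}_i(\X)) / \sum_{j=1}^C \exp(\tilde{f}_j(\X))$, I substitute $\tilde{f}_j(\X) = f_j(\X) + g(\X)$ in both numerator and denominator, factor the common scalar $\exp(g(\X))$ out of the sum in the denominator, and cancel it against the numerator. This yields $\tilde{p}_i(\X) = p_i(\X)$ identically on $\R^D$, which is precisely the shift-invariance property invoked in the statement.

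Second, I would use that the cross-entropy loss depends on the logits \emph{only} through the softmax probabilities: $\ell_i(\X) = -\log p_i(\X)$. Since $\tilde{p}_i \equiv p_i$ as functions on $\R^D$, it follows immediately that $\tilde{\ell}_i(\X) = \ell_i(\X)$ pointwise, establishing the first claim.

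Third, for the gradient claim, I would simply note that two functions that coincide pointwise on an open subset of $\R^D$ have identical gradients wherever they are differentiable, so $\grad \tilde{\ell}_i(\X) = \grad \ell_i(\X)$ follows with no further work. No part of this argument is delicate; the entire content is the cancellation of $\exp(g(\X))$ in the softmax normalizer, and the rest is bookkeeping.
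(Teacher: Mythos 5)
Your proof is correct and rests on the same fact as the paper's: shift-invariance of softmax under adding a common scalar function $g$ to every logit. The paper instead writes out the explicit formulas $\ell_i(\X) = -f_i(\X) + \log\sum_j \exp(f_j(\X))$ and $\grad\ell_i(\X) = -\grad f_i(\X) + \sum_j p_j\,\grad f_j(\X)$ and substitutes $f_j \mapsto f_j + g$ to see the $g$ terms cancel; your route of first establishing $\tilde{p}_i \equiv p_i$ and then deducing gradient equality from pointwise equality of $\tilde{\ell}_i$ and $\ell_i$ is a minor reorganization that spares you computing the gradient formula, but it is substantively the same argument.
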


\begin{proof}
The following expressions relate the loss and neural network function outputs, for the 
case of cross-entropy loss and usage of the softmax function.

    \begin{eqnarray}
    \ell_i(\X) &=& - f_i(\X) + \log \left( \sum_{j=1}^C \exp(f_j(\X)) \right) \label{eqn:loss}\\
    \grad \ell_i(\X) &=& - \grad f_i(\X) + \sum_{j=1}^C p_j \grad f_j(\X)
    \end{eqnarray}

Upon replacing $f_i$ with $\tilde{f}_i = f_i + g$, the proof follows.
\end{proof}

\subsection{Manipulating Loss-Gradients}
Here, we show how we can also change loss-gradients arbitrarily without significantly 
changing the loss values themselves. In this case, the trick is to add a high frequency low 
amplitude sine function to the loss. 

\begin{observation}
   Consider $g(\X) = \epsilon \sin(m \X)$, and $\tilde{\ell}_i(\X) = \ell_i(\X) + g(\X)$, for
   $\epsilon, m \geq \R_+$ and $\X \in \R^D$. Then, it is easy to see that
   $ | \tilde{\ell}_i(\X) - \ell_i(\X) | \leq \epsilon$, and $\| \grad \tilde{\ell}_i(\X) - \grad \ell_i(\X) \|_1 \leq m  \times \epsilon \times D $.
\end{observation}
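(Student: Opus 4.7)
The plan is very short since the statement splits cleanly into two pointwise inequalities, both of which follow from $|\sin|\le 1$ and $|\cos|\le 1$ together with linearity of the gradient. I would handle the loss-value bound first and the loss-gradient bound second, pausing only to fix an interpretation of the symbol $\sin(m\X)$ when $\X\in\R^D$.

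For the first inequality, I would just observe that $\tilde{\ell}_i(\X)-\ell_i(\X)=g(\X)=\epsilon\sin(m\X)$, so taking absolute values and using $|\sin(\cdot)|\le 1$ gives $|\tilde{\ell}_i(\X)-\ell_i(\X)|\le\epsilon$. Nothing else is needed.

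For the gradient inequality, the factor of $D$ in the target bound forces a reading of $\sin(m\X)$ as a scalar function whose gradient has each of its $D$ coordinates of magnitude at most $\epsilon m$. The natural choice is $g(\X)=\epsilon\sin\bigl(m\,\mathbf{1}^\top\X\bigr)=\epsilon\sin\bigl(m\sum_{d=1}^D x_d\bigr)$, for which $\grad g(\X)=\epsilon m\cos\bigl(m\,\mathbf{1}^\top\X\bigr)\,\mathbf{1}$. I would then bound each coordinate by $\epsilon m$ using $|\cos|\le 1$, sum the $D$ coordinates to get $\|\grad g(\X)\|_1\le \epsilon m D$, and conclude via $\grad\tilde{\ell}_i(\X)-\grad\ell_i(\X)=\grad g(\X)$. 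The same bound goes through for any other scalar reading of $\sin(m\X)$ in which each partial derivative is at most $\epsilon m$ in absolute value (e.g.\ $\sin$ of any linear form with unit-bounded coefficients scaled by $m$).

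The only real obstacle is this interpretational ambiguity in the symbol $\sin(m\X)$; once a scalar reading is fixed, the proof is a one-line application of elementary trigonometric bounds and the triangle inequality for $\|\cdot\|_1$. In particular, no property of the classifier $f$, of the softmax, or of the cross-entropy loss enters the argument at all — the observation is purely about how a small additive perturbation to a scalar loss can nevertheless carry a large gradient when its frequency $m$ is large.
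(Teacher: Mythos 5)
The paper supplies no proof for this observation, dismissing it as ``easy to see,'' so your reconstruction is the only one on the table, and it is correct. The interpretational move is the only non-trivial step: for the stated bounds to be simultaneously valid, $\sin(m\X)$ must be read as a scalar, and your choice $g(\X)=\epsilon\sin\bigl(m\,\mathbf{1}^\top\X\bigr)$ is the reading under which both bounds hold with exactly the stated constants --- the other natural candidate $g(\X)=\epsilon\sum_{d=1}^D\sin(mx_d)$ would give $|g|\le\epsilon D$ rather than $\le\epsilon$, and the rescaled variant $(\epsilon/D)\sum_d\sin(mx_d)$ would give $\|\grad g\|_1\le\epsilon m$ rather than $\le\epsilon m D$. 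Once that reading is fixed, both inequalities are immediate from $|\sin|\le1$, $|\cos|\le1$, and the additive identity $\grad\tilde{\ell}_i(\X)-\grad\ell_i(\X)=\grad g(\X)$, exactly as you say. Your closing remark is also worth retaining: the argument uses no property of $f$, the softmax, or the cross-entropy, which is precisely what makes this an observation about the fragility of gradients of any scalar function on $\R^D$ under small high-frequency perturbations, rather than a statement about classifiers in particular.
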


Thus two models with losses differing by some small $\epsilon$ can have gradients differing by $m
\times \epsilon \times D$. For $m \rightarrow \infty$ and a fixed $\epsilon$, the gradients can
diverge significantly. Thus, loss-gradients are also unreliable, as two models with very similar
loss landscapes and hence discriminative abilities, can have drastically different
loss-gradients.

This simple illustration highlights the fact that gradients of high-dimensional black-box models
are not well-behaved in general, and this depends on both the model smoothness and the
high-dimensionality of the inputs. Further, loss values and loss-gradients for highly confident
samples are close to zero. Thus any external noise added (due to stochastic training, for
instance) can easily dominate the loss-gradient terms even when smoothness conditions (small $m$)
are enforced.

\section{Score-Matching Approximation}

We consider the approximation derived for the estimator of the Hessian trace, which is first
derived from Hutchinson's trace estimator \cite{hutchinson1990stochastic}. We replace $\log p_{\theta}(\X)$ terms used in the
main text with $f(\X)$ terms here for clarity. The Taylor series trick for approximating the
Hessian-trace is given below.

\begin{eqnarray}
   \E_{\f{v} \sim \mathcal{N}(0,\mathtt{I})} ~\f{v}^\mathtt{T} \grad^2 f(\X) \f{v} &=& 
   \frac{1}{\sigma^2}  ~ \Ev \f{v}^\mathtt{T} \grad^2 f(\X) \f{v} \nonumber
   \\
   & = & \frac{2}{\sigma^2} ~\Ev \left( f(\X + \f{v}) - f(\X) - \nabla_{x} f(\X)^\mathtt{T} \f{v}  + \mathcal{O}(\sigma^3) \right) \nonumber \\
\end{eqnarray}

As expected, the approximation error vanishes in the limit of small $\sigma$. Let us now consider the finite sample variants of this estimator, with $N$ samples. We shall call this the \textit{Taylor Trace Estimator}. 

\newcommand{\Sv}{\sum_{i=1}^N}

\begin{equation}
    \text{Taylor Trace Estimator (TTE)} = \frac{2}{N \sigma^2} ~\Sv \bigl( f (\X + \f{v}_i) - f(\X) \bigl) ~~~~ \text{s.t.} ~~~\f{v}_i \sim \mathcal{N}(0,\sigma^2 \mathtt{I})
\end{equation}

We shall henceforth suppress the dependence on $i$ for brevity. For this estimator, we can
compute its variance for quadratic functions $f$, where higher-order Taylor expansion terms are
zero. We make the following observation.

\begin{observation}
    For quadratic functions $f$, the variance of the Taylor Trace Estimator is greater than the variance of the Hutchinson estimator by an amount at most equal to $4 \sigma^{-2} \| \grad f(\X) \|^2$.
\end{observation}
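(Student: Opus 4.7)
My plan is to exploit the exactness of the second-order Taylor expansion when $f$ is quadratic, rewriting the Taylor Trace Estimator as the Hutchinson estimator plus an additive linear correction, and then using Gaussian moment calculus to show that the correction is uncorrelated with the Hutchinson term. This reduces the variance gap to the variance of a single linear functional of Gaussian noise, which is straightforward to evaluate.

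First, let $H := \grad^2 f(\X)$, which is constant since $f$ is quadratic. The Taylor expansion is then exact: $f(\X + \f{v}) - f(\X) = \grad f(\X)^\mathtt{T} \f{v} + \tfrac{1}{2} \f{v}^\mathtt{T} H \f{v}$, with no higher-order remainder. Substituting this into the single-sample Taylor Trace Estimator and changing variables via $\f{v} = \sigma \f{u}$ with $\f{u} \sim \mathcal{N}(0,\mathtt{I})$ gives
\[ \mathrm{TTE} \;=\; \f{u}^\mathtt{T} H \f{u} \;+\; \frac{2}{\sigma}\, \grad f(\X)^\mathtt{T} \f{u} \;=\; \mathrm{HE} + L, \]
where $\mathrm{HE} := \f{u}^\mathtt{T} H \f{u}$ is exactly the standard single-sample Hutchinson estimator (in the $\mathcal{N}(0,\mathtt{I})$ parameterization) and $L := (2/\sigma)\, \grad f(\X)^\mathtt{T} \f{u}$ is linear in $\f{u}$.

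Next, I would expand $\mathrm{Var}(\mathrm{TTE}) = \mathrm{Var}(\mathrm{HE}) + 2\,\mathrm{Cov}(\mathrm{HE}, L) + \mathrm{Var}(L)$ and show that the cross term vanishes. Since $\E[L] = 0$, the covariance equals $\E[\mathrm{HE} \cdot L]$, which is a weighted sum of third-order moments $\E[u_i u_j u_k]$ of a zero-mean isotropic Gaussian. These are all zero, so $\mathrm{Cov}(\mathrm{HE}, L) = 0$. Using $\mathrm{Cov}(\f{u}) = \mathtt{I}$, I then compute $\mathrm{Var}(L) = (4/\sigma^2)\, \|\grad f(\X)\|^2$, which yields $\mathrm{Var}(\mathrm{TTE}) - \mathrm{Var}(\mathrm{HE}) = (4/\sigma^2)\, \|\grad f(\X)\|^2$, matching the claimed upper bound (with equality). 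Averaging over $N$ i.i.d.\ samples scales both variances by $1/N$, preserving the bound.

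The main potential obstacle is the zero-covariance step, but because $\f{u}$ is isotropic Gaussian with mean zero, this reduces to the vanishing of odd Gaussian moments and is essentially a one-line calculation. Once the algebraic decomposition $\mathrm{TTE} = \mathrm{HE} + L$ is in hand, the rest is bookkeeping; the only place where the quadratic assumption on $f$ is used is in making the Taylor expansion exact, and relaxing it would introduce cubic-and-higher-moment terms that could require additional smoothness assumptions to bound.
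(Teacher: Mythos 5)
Your proof is correct and follows essentially the same route as the paper's: both decompose the Taylor Trace Estimator as the Hutchinson estimator plus a linear correction (exact because $f$ is quadratic), use the vanishing of odd Gaussian moments to kill the cross term, and then measure the variance of the linear piece. Your version is marginally sharper in that you compute $\text{Var}(L) = 4\sigma^{-2}\|\grad f(\X)\|^2$ exactly and observe the bound is attained at $N=1$, whereas the paper reaches the same upper bound via Cauchy--Schwarz after an identical decomposition.
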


\begin{proof}

\begin{eqnarray*}
    \text{Var(T.T.E.)} &=& \frac{1}{\sigma^4} ~\E_v \left(\frac{2}{N} \Sv \bigl( f (\X + \f{v}) - f(\X) \bigl)  - \E_v \f{v}^\mathtt{T} \grad^2 f(\X) \f{v} \right)^2 \\
    & = & \frac{1}{\sigma^4} ~\E_v \biggl(\frac{2}{N}  \Sv \bigl( f (\X + \f{v}) - f(\X) \bigl)  - \frac{1}{N}  \Sv \f{v}^\mathtt{T} \grad^2 f(\X) \f{v} \\ 
    && + \frac{1}{N} \Sv \f{v}^\mathtt{T} \grad^2 f(\X) \f{v} - \E_v \f{v}^\mathtt{T} \grad^2 f(\X) \f{v} \biggl)^2 \\
    & = & \frac{1}{\sigma^4} ~\E_v \biggl(\frac{2}{N}  \Sv \bigl( f (\X + \f{v}) - f(\X) \bigl)  - \frac{1}{N} \Sv \f{v}^\mathtt{T} \grad^2 f(\X) \f{v} \biggl)^2 \\
    && + \frac{1}{\sigma^4} \E_v \biggl(\frac{1}{N}  \Sv \f{v}^\mathtt{T} \grad^2 f(\X) \f{v} - \E_v \f{v}^\mathtt{T} \grad^2 f(\X) \f{v} \biggl)^2 
\end{eqnarray*}

Thus we have decomposed the variance of the overall estimator into two terms: the first captures the 
variance of the Taylor approximation, and the second captures the variance of the Hutchinson estimator. 

Considering only the first term, i.e.; the variance of the Taylor approximation, we have:

\begin{eqnarray*}
     \frac{1}{N\sigma^4} ~\E_v \biggl(2 \Sv \bigl( f (\X + \f{v}) - f(\X) \bigl)  - \Sv \f{v}^\mathtt{T} \grad^2 f(\X) \f{v} \biggl)^2 & = & \frac{4}{N\sigma^4} ~ \E_v \biggl( \Sv \grad f(\X)^T \f{v} \biggl)^2   \\
     & \leq & \frac{4}{\sigma^4} \| \grad f(\X) \|^2 \E_v \| \f{v} \|^2 \\
     & = & 4 \sigma^{-2} \| \grad f(\X) \|^2
\end{eqnarray*}

The intermediate steps involve expanding the summation, noticing that pairwise terms cancel, and applying
the Cauchy-Schwartz inequality.
\end{proof}

Thus we have a trade-off: a large $\sigma$ results in lower estimator variance but a large Taylor
approximation error, whereas the opposite is true for small $\sigma$. However for functions with small
gradient norm, both the estimator variance and Taylor approximation error is small for small $\sigma$.
We note that when applied to score-matching \cite{hyvarinen2005estimation}, the gradient norm of
the function is also minimized.  This implies that in practice, the gradient norm of the
function is likely to be low, thus resulting in a small estimator variance even for small $\sigma$. 
The variance of the Hutchinson estimator is given below
for reference \cite{hutchinson1990stochastic, avron2011randomized}:

\begin{equation*}
    \text{Var(Hutchinson)}  =  \frac{2}{N} \|\grad^2 f(\X) \|^2_F
\end{equation*}

\section{Evaluating Effect of Score-Matching on Gradient Explanations (on CIFAR10)}
We repeat the pixel perturbation experiments on the CIFAR10 dataset and we observe similar 
qualitative trends. In both cases, we observe that score-matched and gradient norm regularized 
models have more explanatory gradients, while anti-score-matched model contains the least 
explanatory gradients. We also present visualization results of input-gradients of various 
models for reference.

\begin{figure}[h]
   \captionsetup[subfigure]{justification=centering}
   \centering
   \begin{subfigure}{0.4\textwidth}
      \includegraphics[width=0.98\textwidth]{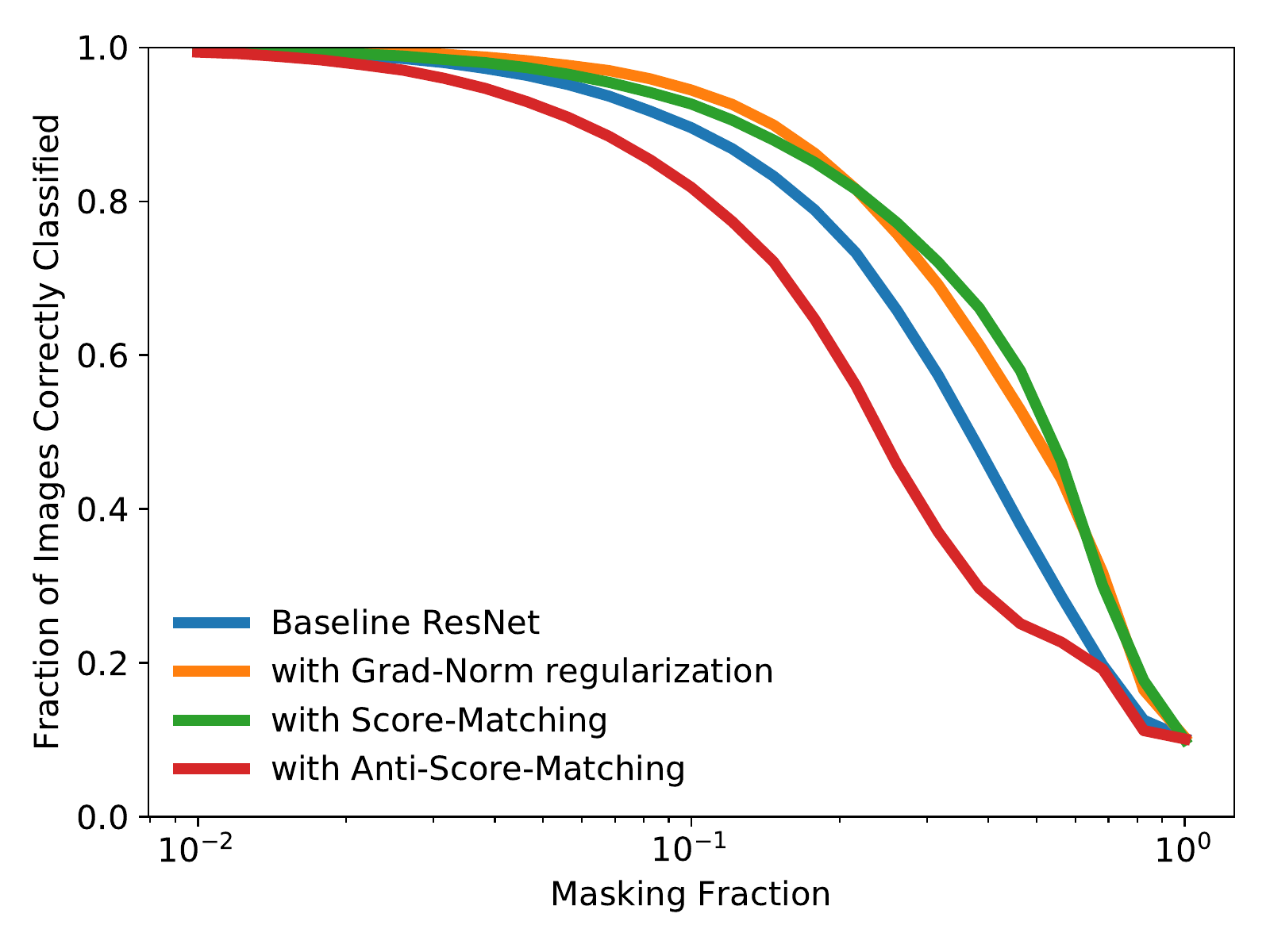}
      \caption{Models evaluated with their own gradients}
   \end{subfigure}%
   \begin{subfigure}{0.4\textwidth}
      \includegraphics[width=0.98\textwidth]{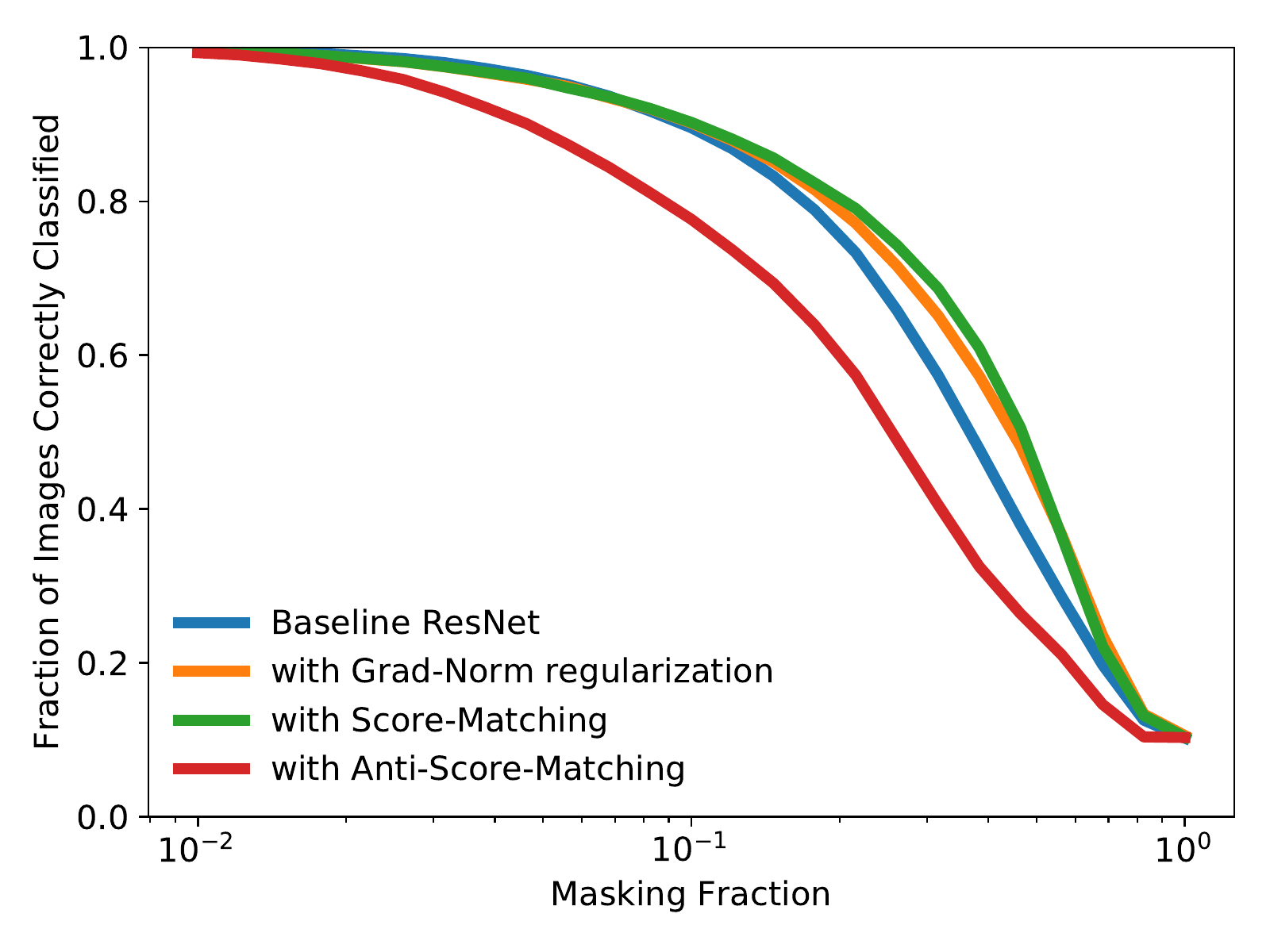}
      \caption{Baseline ResNet evaluated with gradients of \textit{different} models}
   \end{subfigure}
   \caption{Discriminative pixel perturbation results (higher is better) on the CIFAR10 dataset (see \S~\ref{subsec:pixpert-results}).
   We see that score-matched and gradient-norm regularized models best explain model behaviour in both 
   cases, while the anti-score-matched model performs the worst. This agrees with the hypothesis (stated in \S~\ref{sec:score-matching}) that 
   alignment of implicit density models improves gradient explanations and vice versa.}
\end{figure}

\begin{figure}[h]
    \captionsetup[subfigure]{justification=centering}
    \centering
    \begin{subfigure}[t]{0.2\textwidth}
        \centering
        \includegraphics[width=0.9\linewidth]{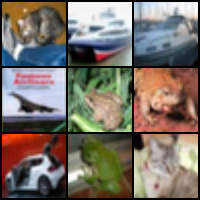}
        \caption{Input Image}
    \end{subfigure}%
    \begin{subfigure}[t]{0.2\textwidth}
        \centering
        \includegraphics[width=0.9\linewidth]{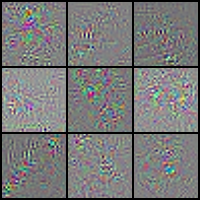}
        \caption{Baseline ResNet}
    \end{subfigure}%
    \begin{subfigure}[t]{0.2\textwidth}
        \centering
        \includegraphics[width=0.9\linewidth]{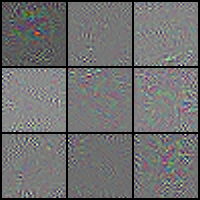}
        \caption{With Anti score-matching}
    \end{subfigure}%
    \begin{subfigure}[t]{0.2\textwidth}
        \centering
        \includegraphics[width=0.9\linewidth]{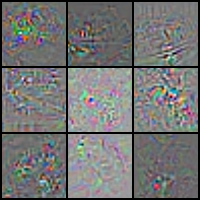}
        \caption{With Gradient-norm regularization}
    \end{subfigure}%
    \begin{subfigure}[t]{0.2\textwidth}
        \centering
        \includegraphics[width=0.9\linewidth]{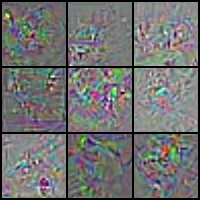}
        \caption{With Score-matching}
    \end{subfigure}
\caption{Visualization of input-gradients of different models. We observe that gradients of score-matched
and gradient-norm regularized models are more perceptually aligned than the others, with the gradients of the anti-score-matched model being the noisiest.
This qualitatively verifies the hypothesis stated in \S~\ref{sec:score-matching}.}
\end{figure}

\section{Hyper-parameter Sweep on Score-Matched Training}

We present results on a hyper-parameter sweep on the $\lambda$ and $\mu$ parameters of score-matching, where we 
provide both test-set accuracy on CIFAR100 and the corresponding GAN-test scores. We find upon performing a 
hyper-parameter sweep that $\lambda=1e-5$ and $\mu =1e-3$ seems to perform the best, whereas in the main paper 
we present results for $\lambda=1e-3$ and $\mu=1e-4$. It is possible that changing the training schedule by 
increasing the number of epochs or learning rate may further improve these results, but we did not explore that 
here. 

\begin{table}[h]
    \begin{center}
        \begin{tabular}{c | c  c  c  c }
            $\lambda \downarrow / \mu \rightarrow$ & $1e-2$ & $1e-3$ & $1e-4$ & $1e-5$ \\ 
            \midrule 
            $1e-2$ & $48.68\% / 51.60\%$  & $64.57\% / 58.90\%$ & $64.75\% / 76.46\%$ & $9.08 \% / 0.97\%$ \\
            $1e-3$ & $64.64\% / 56.78\%$ & $71.37\% / 40.72\%$ & $72.34\% / 73.39\%$ & $34.46\% / 3.3\%$ \\
            $1e-4$ & $69.85\% / 41.30\%$ & $73.97\% / 72.07\%$ & $75.65\% / 79.39\%$ & $72.97\% / 61.52\%$ \\
            $1e-5$ & $73.29\% / 68.94\%$ & $\mathbf{75.37\% / 85.64\%}$ & $76.40\% / 63.96\%$ & $74.80\% / 78.41\%$ \\ 
            $1e-6$ & $75.43\% / 82.81\%$ & $75.90\% / 66.11\%$ & $76.77\% / 65.91\%$ & $75.91\% / 65.52\%$ \\ 
        \end{tabular}
    \end{center}
    \caption{Results of a hyper-parameter sweep on $\lambda$ and $\mu$. The numbers presented are in the format (accuracy \% / GAN-test \% )}
\end{table}

\end{document}